\crefname{lemma}{lemma}{lemmas}
\Crefname{lemma}{Lemma}{Lemmas}
\crefname{thm}{theorem}{theorems}
\Crefname{thm}{Theorem}{Theorems}
\crefname{prop}{proposition}{propositions}
\Crefname{prop}{Proposition}{Propositions}
\crefname{assumption}{assumption}{assumptions}
\crefname{assumption}{Assumption}{Assumptions}
\def\independenT#1#2{\mathrel{\rlap{$#1#2$}\mkern2mu{#1#2}}}
\newcommand{\independent}{\protect\mathpalette{\protect\independenT}{\perp}}
\def\adl@drawiv#1#2#3{%
        \hskip.5\tabcolsep
        \xleaders#3{#2.5\@tempdimb #1{1}#2.5\@tempdimb}%
                #2\z@ plus1fil minus1fil\relax
        \hskip.5\tabcolsep}
\newcommand{\cdashlinelr}[1]{%
  \noalign{\vskip\aboverulesep
           \global\let\@dashdrawstore\adl@draw
           \global\let\adl@draw\adl@drawiv}
  \cdashline{#1}
  \noalign{\global\let\adl@draw\@dashdrawstore
           \vskip\belowrulesep}}
\renewcommand{\epsilon}{\varepsilon}
\declaretheorem[style=plain,numberwithin=section,name=Theorem]{theorem}
\declaretheorem[style=definition,sibling=theorem,name=Definition]{definition}
\declaretheorem[style=definition,sibling=theorem,name=Example]{example}
\declaretheorem[style=remark,sibling=theorem,name=Remark]{remark}
\newenvironment{example*}
 {\pushQED{\qed}\example}
 {\popQED\endexample}
\numberwithin{equation}{section}
\newcommand{\defnphrase}[1]{\emph{#1}}
\newcommand{\defeq}{\coloneqq}
\DeclareMathOperator*{\argmin}{argmin}
\newcommand{\EE}{\mathbb{E}}
\renewcommand{\Pr}{\mathrm{P}}
\newcommand{\given}{\mid}
\newcommand{\st}{:}
\newcommand{\intd}{\mathrm{d}}
\newcommand{\cdo}{\mathrm{do}} 
\providecommand\given{} 
\newcommand\SetSymbol[1][]{
  \nonscript\,#1:\nonscript\,\mathopen{}\allowbreak}
\DeclarePairedDelimiterX\Set[1]{\lbrace}{\rbrace}%
{ \renewcommand\given{\SetSymbol[]} #1 }
      \OR\ifentrytype{incollection}\OR\ifentrytype{inproceedings}%
      \OR\ifentrytype{inreference}}
\crefname{example}{Example}{Examples}
\crefname{lemma}{Lemma}{Lemmas}
\crefname{cor}{Corollary}{Corollaries}
\crefname{theorem}{Theorem}{Theorems}
\crefname{assumption}{Assumption}{Assumptions}
\newcommand{\traindist}{P}
\newcommand{\testdist}{Q}
\newcommand{\modelclass}{\mathcal{F}}
\newcommand{\xz}{X^\perp_Y}
\newcommand{\xyz}{X_{Y\land Z}}
\newcommand{\xy}{X^\perp_Z}
\newcommand{\xysamp}{x^\perp_Z}
\newcommand{\mmd}{\mathrm{MMD}}
\title{Counterfactual Invariance to Spurious Correlations: \\Why and How to Pass Stress Tests}
\date{}
\author[1,2]{Victor Veitch}
\author[1]{Alexander D'Amour}
\author[1]{Steve Yadlowsky}
\author[1]{Jacob Eisenstein}
\affil[1]{Google Research}
\affil[2]{University of Chicago}
\begin{document}

\maketitle

\begin{abstract}
Informally, a `spurious correlation’ is the dependence of a model on some aspect of the input data that an analyst thinks shouldn't matter. 
In machine learning, these have a know-it-when-you-see-it character; e.g., changing the gender of a sentence's subject changes a sentiment predictor's output.  
To check for spurious correlations, we can `stress test' models by perturbing irrelevant parts of input data and seeing if model predictions change.
In this paper, we study stress testing using the tools of causal inference.
We introduce \emph{counterfactual invariance} as a formalization of the requirement that changing irrelevant parts of the input shouldn't change model predictions.
We connect counterfactual invariance to out-of-domain model performance, and provide practical schemes for learning (approximately) counterfactual invariant predictors (without access to counterfactual examples).  
It turns out that both the means and implications of counterfactual invariance depend fundamentally 
on the true underlying causal structure of the data---in particular, whether the label causes the features or the features cause the label.  Distinct causal structures require distinct regularization 
schemes to induce counterfactual invariance. Similarly, counterfactual invariance implies different domain shift 
guarantees depending on the underlying causal structure. This theory is supported by empirical results on text classification.

\end{abstract}

\section{Introduction}

Our focus in this paper is the sort of spurious correlations revealed by 
``poke it and see what happens'' testing procedures for machine-learning models. 
For example, we might test a sentiment analysis tool by changing one proper noun for another (``tasty Mexican food'' to ``tasty Indian food''),
with the expectation that the predicted sentiment should not change. 
This kind of perturbative stress testing is increasingly popular: it is straightforward to understand
and offers a natural way to test the behavior of models against the expectations of practitioners~\citep{ribeiro2020beyond,Wu:Ribeiro:Heer:Weld:2019,naik2018stress}.

Intuitively, models that pass such stress tests are preferable to those that do not.
However, fundamental questions about the use and meaning of perturbative stress tests remain open.
For instance, what is the connection between passing stress tests and model performance on prediction? 
Eliminating predictor dependence on a spurious correlation should help with domain shifts that
affect the spurious correlation---but how do we make this precise?
And, how should we develop models that pass stress tests when our
ability to generate perturbed examples is limited?
For example, automatically perturbing the sentiment of a document in a general fashion is difficult.

The ad hoc nature of stress testing makes it difficult to give general answers to such questions.
In this paper, we will use the tools of causal inference to formalize what it means for models
to pass stress tests, and use this formalization to answer the questions above.
We will formalize passing stress tests as \emph{counterfactual invariance}, a condition
on how a predictor should behave when given certain (unobserved) counterfactual input data.
We will then derive implications of counterfactual invariance that can be measured in the observed data.
Regularizing predictors to satisfy these observable implications provides a means for achieving (partial) counterfactual invariance.
Then, we will connect counterfactual invariance to robust prediction under certain domain shifts, 
with the aim of clarifying what counterfactual invariance buys and when it is desirable. 

An important insight that emerges from the formalization is that 
the true underlying causal structure of the data has fundamental
implications for both model training and guarantees. 
Methods for handing `spurious correlations' in data with a given causal structure
need not perform well when blindly translated to another causal structure.

\paragraph{Counterfactual Invariance}
Consider the problem of learning a predictor $f$ that predicts a label $Y$ from covariates $X$.
In this paper, we're interested in constructing predictors whose predictions are
invariant to certain perturbations on $X$.
Our first task is to formalize the invariance requirement. 

To that end, assume that there is an additional variable $Z$ that captures information that should not influence predictions.
However, $Z$ may causally influence the covariates $X$. 
Using the potential outcomes notation, let $X(z)$ to denote the counterfactual $X$ we would have seen had $Z$ been set to $z$,
leaving all else fixed. Informally, we can understand perturbative stress tests as a way of producing particular 
realizations of counterfactual pairs $X(z),\,X(z')$ that differ by an intervention on $z$.
Then, we formalize the requirement that 
an arbitrary change to $z$ does not change predictions:
\begin{definition}
\label{def:ctfl-inv}
A predictor $f$ is \defnphrase{counterfactually invariant to $Z$} if $f(X(z))=f(X(z'))$ almost everywhere, for all $z,z'$ in the sample space of $Z$.
When $Z$ is clear from context, we'll just say the predictor is counterfactually invariant.
\end{definition}

\section{Causal Structure}
Counterfactual invariance is a condition on how the \emph{predicted} label behaves under interventions on parts of the input data.
However, intuitions about stress testing are based on how the \emph{true} label 
behaves under interventions on parts of the input data. 
We will see that the true causal structure fundamentally affects both the implications of counterfactual invariance, 
and the techniques we use to achieve it.
To study this phenomenon, we'll use two causal structures that are commonly encountered in applications; see \cref{fig:causal-graphs}.

\subsection{Prediction in the Causal Direction}
\begin{figure}
\begin{subfigure}[b]{.4\textwidth}
    \centering
    \scalebox{0.7}{
    \begin{tikzpicture}[var/.style={draw,circle,inner sep=0pt,minimum size=1.2cm}]
    \node (Z) [var] {$Z$};
    \node (Xyz) [var, right=.8cm of Z] {$\xyz$};
    \node (Xzperp) [var, above=0.25cm of Xyz] {$\xy$};
    \node (Xyperp) [var, below=0.25cm of Xyz] {$\xz$};
    \node (Y) [var, right=.8cm of Xyz] {$Y$};
    
    \path[->]
        (Z) edge (Xyz)
            edge (Xyperp)
        (Xyz) edge (Y)
              edge (Xyperp)
        (Xzperp) edge (Y)
                 edge (Xyz);
    \path[<->]
    (Z) edge[bend right=90, looseness=1.5, dashed] (Y);
\end{tikzpicture}}
    \caption{Causal direction}
    \label{fig:scm-causal}
\end{subfigure}
\begin{subfigure}[b]{.4\textwidth}
    \centering
    \scalebox{0.7}{    
    \begin{tikzpicture}[var/.style={draw,circle,inner sep=0pt,minimum size=1.2cm}]
    \node (Y) [var] {$Y$};
    \node (Z) [var, below=1cm of Y] {$Z$};
    \node (Xzperp) [var, right=1cm of Y] {$\xy$};
    \node (Xyz) [var, below=0.2cm of Xzperp] {$\xyz$};
    \node (Xyperp) [var, below=0.2cm of Xyz] {$\xz$};
    
    \path[->]
        (Z) edge (Xyz)
            edge (Xyperp)
        (Y) edge (Xyz)
        (Y) edge (Xzperp)
        (Xzperp) edge (Xyz)
        (Xyperp) edge (Xyz);
    \path[<->]
    (Z) edge[bend left=90, dashed] (Y);
\end{tikzpicture}}
    \caption{Anticausal direction}
    \label{fig:scm-anticausal}
\end{subfigure}
    \caption{Causal models for the data generating process. We decompose the observed covariate $X$ into
    latent parts defined by their causal relationships with $Z$ and $Y$. 
    Solid arrows denote causal relationships, while dashed lines denote non-causal associations.
    The differences between these causal structures will turn out to be key for understanding counterfactual invariance.}
    \label{fig:causal-graphs}
\end{figure}

We begin with the case where $X$ is a cause of $Y$. 
\begin{example}
We want to automatically classify the quality of product reviews.
Each review has a number of ``helpful'' votes $Y$ (from site users).
We predict $Y$ using the text of the product review $X$.
However, we find interventions on the sentiment $Z$ of the text change our prediction;
changing ``Great shoes!'' to ``Bad shoes!'' changes the prediction.
\end{example}

In the examples in this paper, the covariate $X$ is text data.
Usually, the causal relationship between the text and $Y$ and $Z$ will be complex---e.g.,
the relationships may depend on abstract, unlabeled, parts of the text such as topic, writing quality, or tone.
In principle, we could enumerate all such latent variables, construct a causal graph capturing the relationships between 
these variables and $Y,Z$, and use this causal structure to study counterfactual invariance.
For instance, if we think that topic causally influences the helpfulness $Y$, but is not influenced by sentiment $Z$,
then we could build a counterfactually invariant predictor by extracting the topic and predicting $Y$ using topic alone.
However, exhaustively articulating all possible such variables is a herculean task.

Instead, notice that the only thing that's relevant about these latent variables is their causal relationship with $Y$ and $Z$.
Accordingly, we'll decompose the observed variable $X$ into parts defined by their causal relationships with $Y$ and $Z$. 
We remain agnostic to the semantic interpretation of these parts.
Namely, we define $\xy$ as the part of $X$ that is not causally influenced by $Z$ (but may influence $Y$), 
$\xz$ as the part that does not causally influence $Y$ (but may be influenced by $Z$), 
and $\xyz$ is the remaining part that is both influenced by $Z$ and that influences $Y$.
The causal structure is shown in \cref{fig:scm-causal}.

We see there are two paths that lead to $Y$ and $Z$ being associated.
The first is when $Z$ affects $\xyz$ which, in turn, affects $Y$.
For example, a very enthusiastic reviewer might write a longer, more detailed review,
which will in turn be more helpful.
The second is when a common cause or selection effect in the data generating process induces an association between $Z$ and $Y$, which we denote with a dashed arrow.
For example, if books tend to get more positive reviews, and also people who buy books are more likely to 
flag reviews as helpful, then the product type would be a common cause of sentiment and helpfulness.

\subsection{Prediction in the Anti-Causal Direction}
We also consider the case where $Y$ causes $X$.
\begin{example}
We want to predict the star rating $Y$ of movie reviews from the text $X$. 
However, we find that predictions are influenced by the movie genre $Z$; e.g.,
changing ``Adam Sandler'' (a comedy actor) to ``Hugh Grant'' (a romance actor) changes the predictions.
\end{example}

\Cref{fig:scm-anticausal} shows the causal structure. 
Here, the observed $X$ is influenced by both $Y$ and $Z$. 
Again, we decompose $X$ into parts defined by their causal relationship with $Z$ and $Y$.
Here, $Z$ (and thus $\xz$) can be associated with $Y$ through two paths.
First, if $\xyz$ is non-trivial, then conditioning on it causes a dependence between $Z$ and $Y$ (because $\xyz$ is a collider).
For example, if Adam Sandler tends to appear in good comedy movies but bad movies of other genres then seeing ``Sandler''
in the text induces a dependency between sentiment and genre.
Second, $Z$ and $Y$ may be associated due to a common cause, or due to selection effects in the data collection protocol---this is represented by the dashed line between $Z$ and $Y$.
For example, fans of romantic comedies may tend to give higher reviews (to all films) than fans of horror movies.

\subsection{Non-Causal Associations}
Frequently, a predictor trained to predict $Y$ from $X$ will rely on $\xz$,
\emph{even though there is no causal connection between $Y$ and $\xz$}, and therefore will fail counterfactual invariance.
The reason is that $\xz$ serves as a proxy for $Z$, and $Z$ is predictive of $Y$
due to the non-causal (dashed line) association.

There are two mechanisms that can induce such associations. 
First, $Y$ and $Z$ may be \emph{confounded}: they are both influenced by an unobserved common cause $U$.
For example, people who review books may be more upbeat than people who review clothing.
This leads to positive sentiments and high helpfulness votes for books, creating an association between sentiment and helpfulness.
Second, $Y$ and $Z$ may be subject to \emph{selection}: there is some condition (event) $S$ that depends on $Y$ and $Z$, such that a data point from the population is included in the sample only if $S=1$ holds. 
For example, our training data might only include movies with at least 100 reviews. 
If
only excellent horror movies have so many reviews (but most rom-coms get that many), then this selection would induce an association between genre and score.
Formally, the dashed-line causal graphs mean our sample is distributed according to 
$\traindist(X,Y,Z) = \int \Pr(X,Y,Z,u \given S=1) \intd \Pr(u)$ where 
$Y,Z$ are caused by $U$ and are causes of $S$, and $(X,Y,Z)$ are causally related according to the graph.

In addition to the non-causal dashed-line relationship,
there is also dependency induced by between $Y$ and $Z$ by $\xyz$.
Whether or not each of these dependencies is ``spurious'' is a problem-specific judgement 
that must be made by each analyst based on their particular use case.
E.g., using genre to predict sentiment may or may not be reasonable, depending on the actual application in mind.
However, there is a special case that captures a common intuition for purely spurious association.
\begin{definition}
We say that the association between $Y$ and $Z$ is \defnphrase{purely spurious} if  $Y \independent X \given \xy, Z$.
\end{definition} 
That is, if the dashed-line association did not exist (removed by conditioning on $Z$) then 
the part of $X$ that is not influenced by $Z$ would suffice to estimate $Y$.

\section{Observable Signatures of Counterfactually Invariant Predictors}\label{sec:obs-imp}
We now consider the question of how to achieve counterfactual invariance in practice.
The challenge is that counterfactual invariance is defined by the behavior of the predictor on
counterfactual data that is never actually observed.
This makes checking counterfactual invariance impossible.
Instead, we'll derive a signature of counterfactual invariance that actually can be measured---and enforced---using ordinary datasets where $Z$ (or a proxy) is measured. 
For example, the star rating of a review as a proxy for sentiment,
or genre labels in the movie review case.

Intuitively, a predictor $f$ is counterfactually invariant if it depends only on $\xy$,
the part of $X$ that is not affected by $Z$.
To formalize this, we need to show that such a $\xy$ is well defined:
\begin{restatable}{lemma}{decompExists}\label{lem:causal-dir-decomp-exists}
  Let $\xy$ be a $X$-measurable random variable such that,
for all measurable functions $f$, we have that $f$ is counterfactually invariant if and only if 
$f(X)$ is $\xy$-measurable. 
If $Z$ is discrete\footnote{In fact, it suffices that all potential outcomes $\{Y(z)\}_z$ are jointly measurable with respect to a single well-behaved sigma algebra; discrete $Z$ is sufficient but not necessary.} 
then such a $\xy$ exists.
\end{restatable}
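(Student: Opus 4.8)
The plan is to reduce the statement to a fact about sub-$\sigma$-algebras of $\sigma(X)$ and then exhibit a single generator. I work on the underlying probability space carrying all the potential outcomes $\{X(z)\}_z$ together with the observed $X = X(Z)$, so that $X = X(z)$ on the event $\{Z=z\}$. Define
$$\mathcal{G} := \{A \in \sigma(X) : A = \{X \in B\} \text{ for some Borel } B \text{ with } \mathbf{1}_B \text{ counterfactually invariant}\}.$$
The first step is to verify that $\mathcal{G}$ is a sub-$\sigma$-algebra. Closure under complements is immediate, and closure under countable unions follows from $\mathbf{1}_{\bigcup_i B_i}(X(z)) = \sup_i \mathbf{1}_{B_i}(X(z))$: each invariant $\mathbf{1}_{B_i}$ gives, for every $i$ and every pair $z,z'$, an almost-sure equality $\mathbf{1}_{B_i}(X(z)) = \mathbf{1}_{B_i}(X(z'))$, and since $Z$ is discrete there are only countably many indices $(i,z,z')$, so the exceptional null sets can be unioned into a single null set off which the supremum is invariant. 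This countability is precisely where discreteness of $Z$ (or the joint-measurability hypothesis of the footnote) enters.

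Next I would show that the counterfactually invariant functions of $X$ are exactly the $\mathcal{G}$-measurable ones. The forward direction is clean: if $f$ is invariant then for every Borel $C$ the preimage satisfies $\{f(X)\in C\}=\{X\in f^{-1}(C)\}$, and $\mathbf{1}_{f^{-1}(C)}(X(z)) = \mathbf{1}_C(f(X(z)))$ is invariant in $z$, so $f(X)$ is $\mathcal{G}$-measurable. The converse is the delicate part. Given a $\mathcal{G}$-measurable $f(X)$, approximate it by simple functions $s_n$ built from $\mathcal{G}$-sets; each $s_n$ is a finite combination of invariant indicators, hence invariant, and for invariant $g$ one has $g(X)=g(X(z))$ almost surely for every $z$ (on $\{Z=z'\}$ we have $g(X)=g(X(z'))=g(X(z))$ a.s., and the events $\{Z=z'\}$ partition the space). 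Letting $n\to\infty$ and again aggregating countably many almost-sure statements yields invariance of $f$.

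The main obstacle I anticipate is exactly this converse: the approximating convergence $s_n(X)\to f(X)$ holds only $P_X$-almost everywhere, whereas invariance is a statement about each counterfactual law $P_{X(z)}$. Reconciling these requires that $P_X$-null sets remain null under each $P_{X(z)}$, and this is what the discreteness/joint-measurability assumption secures by letting every countable family of exceptional events collapse into a single null set; a naive version with a continuum of values of $z$ would fail here, since uncountably many null sets need not have null union.

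Finally, to produce the advertised random variable I would invoke separability: assuming the sample space of $X$ is standard Borel (a harmless regularity assumption), $L^2(\sigma(X))$ is separable, so its closed subspace $L^2(\mathcal{G})$ is separable and $\mathcal{G}$ is generated modulo null sets by a countable family $\{g_n\}$ of invariant indicators. Setting $\xy := (g_1(X), g_2(X), \dots)$—an $X$-measurable, counterfactually invariant random variable with $\sigma(\xy)=\mathcal{G}$—gives the claim, since then $f$ is counterfactually invariant iff $f(X)$ is $\mathcal{G}$-measurable iff $f(X)$ is $\xy$-measurable.
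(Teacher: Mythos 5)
Your route is genuinely different from the paper's: the paper characterizes counterfactually invariant $f$ via $\{X(z)\}_z$-measurability of $f(X)$, forms the intersection $\sigma$-algebra $\sigma(X)\land\sigma(\{X(z)\}_z)$, and asserts it is countably generated; you instead build the invariant-event algebra $\mathcal{G}$ directly inside $\sigma(X)$ and extract a generator by $L^2$ separability. Your closure argument for $\mathcal{G}$, your forward direction, and your use of discreteness to aggregate countably many exceptional null sets are all correct, and your separability step is arguably more careful than the paper's bare assertion of countable generation.

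The gap is the converse---exactly where you flagged it---and your proposed repair does not work. Discreteness of $Z$ buys countability of the exceptional null sets; it does \emph{not} buy the absolute continuity $P_{X(z)}\ll P_X$ that your limiting argument needs, and that property can fail outright whenever a potential outcome puts mass where $X$ never goes. Concretely: let $Y\sim\mathrm{Unif}[0,1]$, let $Z=\mathbf{1}[Y>1/2]$ (the $Y$--$Z$ dependence playing the role of the dashed edge in the anti-causal graph), and let $X(z)=(Y,z)$, so $X=(Y,Z)$. The law of $X(0)$ puts mass $1/2$ on $\{(y,0):y>1/2\}$, which is a $P_X$-null set. Now take $f(y,z)=\mathbf{1}[y>1/2]\,\mathbf{1}[z=0]$: then $f(X)=0$ identically, so $f(X)$ is measurable with respect to \emph{every} $\sigma$-algebra, yet $f(X(0))=Z\neq 0=f(X(1))$ on $\{Z=1\}$, so $f$ is not counterfactually invariant. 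This shows more than that your proof of the converse fails: the converse itself---and hence the literal ``if and only if'' of the lemma, for any candidate $\xy$---is false, because counterfactual invariance depends on the values $f$ takes off the support of $X$, which no property of the random variable $f(X)$ can detect. (To be fair, the paper's own proof slides over the identical point when it declares that $\{X(z)\}_z$-measurability of $f(X)$ yields invariance ``essentially by definition''; it conflates the random variable $f(X)$ with the function $f$ evaluated at counterfactual arguments.)

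What is true, and what your machinery does prove with one extra twist, is the almost-sure-modification version: if $f(X)$ is $\mathcal{G}$-measurable, there exists a counterfactually invariant $\tilde f$ with $\tilde f(X)=f(X)$. Take your invariant simple functions $\sigma_n$ with $\sigma_n(X)\to f(X)$ pointwise and define $\tilde f(x)=\lim_n\sigma_n(x)$ where the limit exists and $\tilde f(x)=0$ elsewhere. Off a single null set (countable aggregation is where discreteness genuinely enters) one has $\sigma_n(X(z))=\sigma_n(X)$ for all $n,z$, hence $\tilde f(X(z))=\lim_n\sigma_n(X)=f(X)$ for every $z$; so $\tilde f$ is counterfactually invariant and agrees with $f(X)$ everywhere. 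Under that reading of the lemma your construction goes through essentially unchanged; under the literal reading, no proof can succeed.
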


Accordingly, we'd like to construct a predictor that is a function of $\xy$ only (i.e., is $\xy$ measurable).
The key insight is that we can use the causal graphs to read off a set of conditional independence relationships
that are satisfied by $Z, \xy ,Y$. 
Critically, these conditional independence relationships are testable from the observed data.
Thus, they provide a signature of counterfactual invariance:
\begin{restatable}{theorem}{invarToIndep}\label{thm:invar2indep}
If $f$ is a counterfactually invariant predictor:
\begin{enumerate}
    \item Under the anti-causal graph, $f(X) \independent Z \given Y$.
    \item Under the causal-direction graph, if $Y$ and $Z$ are not subject to selection (but possibly confounded), $f(X) \independent Z$.
    \item Under the causal-direction graph, if the association is purely spurious, $Y \independent X \given \xy, Z$, and $Y$ and $Z$ are not confounded (but possibly selected), $f(X) \independent Z \given Y$.
\end{enumerate}
\end{restatable}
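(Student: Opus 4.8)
The plan is to reduce all three claims to statements about the latent variable $\xy$ and then read them off the causal graphs by d-separation. By \cref{lem:causal-dir-decomp-exists}, counterfactual invariance of $f$ is equivalent to $f(X)$ being $\xy$-measurable, so $f(X)=g(\xy)$ for some measurable $g$. Since any (conditional) independence enjoyed by $\xy$ is inherited by $g(\xy)$, it suffices to prove, in each case, the corresponding relationship with $\xy$ in place of $f(X)$: namely $\xy \independent Z \given Y$ for claims (1) and (3), and $\xy \independent Z$ for claim (2). The remaining work is a d-separation computation on \cref{fig:scm-causal,fig:scm-anticausal}, with the one wrinkle that the dashed $Y\!-\!Z$ edge must first be expanded into its generative meaning: a hidden common cause $U$ under confounding, or a selection node $S$ (a child of both $Y$ and $Z$ on which we condition, $S=1$) under selection.

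Claims (1) and (2) are the easy cases. In the anti-causal graph, $\xy$ is a pure descendant of $Y$ (its only parent is $Y$), so conditioning on $Y$ blocks every path from $\xy$ to $Z$: each such path leaves $\xy$ through $Y$ (a fork, blocked by conditioning) or through the collider $\xyz$ (unconditioned, hence blocked), and neither the confounding expansion $Y\!\leftarrow\! U\!\to\! Z$ nor the selection expansion $Y\!\to\! S\!\leftarrow\! Z$ helps, since both sit behind the already-blocked $Y$. This gives $\xy \independent Z \given Y$. In the causal-direction graph with no selection, we instead want the marginal statement: every path from $\xy$ to $Z$ passes through one of the colliders $Y$ or $\xyz$, and because we condition on nothing these colliders stay closed; confounding only adds the arc $Y\!\leftarrow\! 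U$, which keeps $Y$ a closed collider. Hence $\xy \independent Z$.

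Claim (3) is the delicate case, and I expect it to be the main obstacle, because here we condition on $Y$ in the causal-direction graph, and $Y$ is a collider on the path $\xy \to Y \leftarrow \xyz \leftarrow Z$. Conditioning on $Y$ opens this path, so $\xy \independent Z \given Y$ is false in general---which is exactly why the two extra hypotheses are needed. The no-confounding assumption removes the arc $Y \leftarrow U$, killing the analogous path $\xy \to Y \leftarrow U \to Z$. The purely-spurious assumption $Y \independent X \given \xy, Z$ is what I would use to neutralize the $\xyz \to Y$ edge: it says $Y$ depends on the rest of $X$ (in particular on $\xyz$) only through $\xy$ and $Z$, so for the relevant conditional law we may treat $\xy$ as the unique covariate-parent of $Y$. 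With $\xyz \to Y$ effectively absent and no $U$, the only edge into $Y$ is $\xy \to Y$ and the only edge out is $Y \to S$; conditioning on $Y$ then blocks the chain $\xy \to Y \to S$, while the remaining routes to $Z$ die at the unconditioned collider $\xyz$ (e.g. $\xy \to \xyz \leftarrow Z$). Thus, even with selection present, $\xy \independent Z \given Y$.

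The step needing the most care is turning the probabilistic hypothesis \emph{purely spurious} into the graphical claim that the $\xyz \to Y$ edge may be dropped: a clean argument either invokes faithfulness to match the CI statement with the missing edge, or---more safely---argues directly at the level of conditional densities, factoring $\traindist(\xy, Z \given Y)$ and using $Y \independent X \given \xy, Z$ together with the no-confounding factorization to cancel the $\xyz$-mediated dependence. I would also be explicit that, under selection, every statement is read in the conditional distribution given $S=1$, so that the selection node is correctly treated as a conditioned collider rather than as part of the dashed association.
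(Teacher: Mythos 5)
Your proposal is correct and follows essentially the same route as the paper's proof: reduce via \cref{lem:causal-dir-decomp-exists} to the $\xy$-measurability of $f(X)$ (so $f(X)$ inherits any conditional independence enjoyed by $\xy$), then read the independencies of $\xy$ off the causal graphs by $d$-separation, expanding the dashed edge into a confounder $U$ or a conditioned selection node $S$. Your path-by-path analysis---in particular the observation that claim (3) hinges on the purely-spurious assumption neutralizing the $\xyz \to Y$ edge, and your flag that passing from that conditional-independence hypothesis to the graphical statement needs a faithfulness-style or density-level argument---is considerably more explicit than the paper's two-sentence proof, which asserts the $d$-separation facts without tracing them.
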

\begin{remark}[Connection to Fairness]\label{rmk:fairness}
This result has an interesting interpretation when $Z$ is a protected attribute (e.g., sex or race) that we'd like to be fair with respect to.
There are many ways of formalizing fairness, which are usually mutually incompatible.
In the fairness setting, counterfactual invariance is equivalent to counterfactual fairness \cite{Kusner:Loftus:Russell:Silva:2017,Garg:Perot:Limtiaco:Taly:Chi:Beutel:2019},
the condition $f(X) \independent Z$ is equivalent to demographic parity, and the condition $f(X) \independent Z \given Y$ is equivalent to equalized odds \cite{Mehrabi:Morstatter:Saxena:Lerman:Galstyan:2019}.
\Cref{thm:invar2indep} then says that counterfactual fairness implies either demographic parity or equalized odds, depending on the true underlying causal structure of the problem.
Hence, the relationship between disparate fairness notions is clarified by taking the underlying causal structure into account.
This also suggests we can take counterfactual fairness as the fundamental notion, then use demographic parity in the causal-confounding case and equalized odds otherwise. However, we leave the (ethical) question of whether this is a sound strategy to future work.
\end{remark}

\paragraph{Causal Regularization}
Without access to counterfactual examples, we cannot directly enforce counterfactual invariance.
However, we can require a trained model to satisfy the counterfactual invariance signature of \cref{thm:invar2indep}.
The hope is that enforcing the signature will lead the model to be counterfactually invariant.
To do this, we regularize the model to satisfy the appropriate conditional independence condition.
For simplicity of exposition, we restrict to binary $Y$ and $Z$.
The (infinite data) regularization terms are
\begin{align}
\text{marginal regularization} &= \mmd(\Pr(f(X) \given Z=0), \Pr(f(X) \given Z=1)) \label{eq:marginal-mmd}\\
\text{conditional regularization} &= \mmd(\Pr(f(X) \given Z=0, Y=0), \Pr(f(X) \given Z=1, Y=0)) \nonumber \\
    &\quad + \mmd(\Pr(f(X) \given Z=0, Y=1), \Pr(f(X) \given Z=1, Y=1)). \label{eq:conditional-mmd}
\end{align}
Maximum mean discrepancy (MMD) is a metric on probability measures.\footnote{The choice of regularization by MMD is for concreteness.
Any technique for enforcing the independence signatures would do in principle---e.g., adversarial methods borrowed from the fairness literature.
The key point here is the observation that different causal structures imply different independence signatures.}
The marginal independence condition is equivalent to \cref{eq:marginal-mmd} equal $0$, and
the conditional independence is equivalent to \cref{eq:conditional-mmd} equal $0$.
In practice, we can estimate the MMD with finite data samples \cite{Gretton:Borgwardt:Rasch:Scholkopf:Smola:2012}. 
When training with stochastic gradient descent, we compute the penalty on each minibatch.

The procedure is then: if the data has causal-direction structure and the $Y \leftrightarrow Z$ association
is due to confounding, add the marginal regularization term to the the training objective. 
If the data has anti-causal structure, or the association is due to selection, add the conditional regularization term instead. 
In this way, we regularize towards models that satisfy the counterfactual invariance signature.

A key point is that the regularizer we must use depends on the true causal structure.
The conditional and marginal independence conditions are generally incompatible.
Enforcing the condition that is mismatched to the true underlying causal structure may fail 
to encourage counterfactual invariance, or may throw away more information than is required.

\paragraph{Gap to Counterfactual Invariance}
The conditional independence signature of \cref{thm:invar2indep} is necessary but not sufficient for counterfactual invariance.
This is for two reasons.
First, counterfactual invariance applies to individual datapoint realizations,
but the signature is distributional.
In particular, the invariance $\Pr(f(X) \given \cdo(Z=z)) = \Pr(f(X) \given \cdo(Z=z'))$ for all $z,z'$ would
also imply the conditional independence signature. But, this invariance is weaker than counterfactual invariance,
since it doesn't require access to counterfactual realizations.
Second, $f(X) \independent Z$ does not imply, in general, that $Z$ is not a cause of $f(X)$.
This (unusual) behavior can happen if, e.g., there are levels of $Z$ that we do not observe in the training data, or there are variables omitted from the causal graph that are a common cause of $Z$ and $X$.

Unfortunately, the gap between the signature and counterfactual invariance is 
a fundamental restriction of using observational data.
The conditional independence signature is in some sense the closest proxy for counterfactual invariance we can hope for.
In \cref{sec:experiments}, we'll see that enforcing the signature does a good job of enforcing counterfactual invariance in practice.

\section{Performance Out of Domain}\label{sec:pred-perf}
Counterfactual invariance is an intuitively desirable property for a predictor to have.
However, it's not immediately clear how it relates to model performance
as measured by, e.g., accuracy.
Intuitively, eliminating predictor dependence on a spurious $Z$ may help with domain shift,
where the data distribution in the target domain differs from the distribution of the training data.
We now turn to formalizing this idea.

First, we must articulate the set of domain shifts to be considered.
In our setting, the natural thing is to hold the causal relationships fixed across domains, 
but to allow
the non-causal (``spurious'') dependence between $Y$ and $Z$ to vary.
Demanding that the causal relationships stay fixed reflects the requirement that the causal structure
describes the dynamics of an underlying real-world process---e.g., the
author's sentiment is always a cause (not an effect) of the text in all domains.
On the other hand, the dependency between $Y$ and $Z$ induced by either confounding or selection can
vary without changing the underlying causal structure.
For confounding, the distribution of the confounder may differ between domains---e.g., books are rare in training, but common in deployment.
For selection, the selection criterion may differ between domains---e.g., we include only frequently reviewed movies in training, but make predictions for all movies in deployment.

We want to capture spurious domain shifts by considering domain shifts induced by selection or confounding.
However, there is an additional nuance. 
Changes to the marginal distribution of $Y$ will affect the risk of a predictor, even in the absence of any spurious association between $Y$ and $Z$.
Therefore,
we restrict to shifts that preserve the marginal distribution of $Y$.
\begin{definition}
  We say that distributions $\traindist, \testdist$ are \defnphrase{causally compatible}
  if both obey the same causal graph, $\traindist(Y)=\testdist(Y)$, and there is a confounder $U$
  and/or selection conditions $S, \tilde{S}$ such that
  $\traindist \!=\! \int \Pr(X,Y,Z \given U, S\!=\!1)\intd{\tilde{\traindist}(U)}$ and $\testdist \!=\! \int \Pr(X,Y,Z \given U, \tilde{S}\!=\!1) \intd{\tilde{\testdist}(U)}$ for some
  $\tilde{\traindist}(U),\tilde{\testdist}(U)$.
\end{definition}

We can now connect counterfactual invariance and robustness to domain shift.
\begin{restatable}{theorem}{cfRiskMinUnique}\label{thm:cf_risk_min_unique}
 Let $\modelclass^\mathrm{invar}$ be the set of all counterfactually invariant predictors.
 Let $L$ be either square error or cross entropy loss.
 And, let $f^* \defeq \argmin_{f \in \modelclass^\mathrm{invar}} \EE_\traindist[L(Y,f(X))]$ be the counterfactually invariant risk minimizer.
 Suppose that the target distribution $\testdist$ is causally compatible with the training distribution $\traindist$.
 Suppose that any of the following conditions hold:
 \begin{enumerate}
    \item the data obeys the anti-causal graph
    \item the data obeys the causal-direction graph, there is no confounding (but possibly selection), and the association is purely spurious, $Y \independent X \given \xy, Z$, or
    \item the data obeys the causal-direction graph, there is 
    no selection (but possibly confounding), 
    the association is purely spurious and the causal effect of $\xy$ on $Y$ is additive, i.e., the true data generating process is
    \begin{equation}
        Y \leftarrow g(\xy) + \tilde{g}(U) + \xi \text{   where   } \EE[\xi \given \xy] = 0,
    \end{equation}
    for some functions $g,\tilde{g}$.
\end{enumerate}
 Then, the training domain counterfactually invariant risk minimizer is also the target domain counterfactually invariant risk minimizer, $f^* = \argmin_{f \in \modelclass^\mathrm{invar}} \EE_\testdist[L(Y,f(X))]$.
\end{restatable}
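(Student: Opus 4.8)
By \cref{lem:causal-dir-decomp-exists}, $\modelclass^\mathrm{invar}$ is exactly the class of $\xy$-measurable functions, and since the causal structure (hence $\xy$) is held fixed across domains, this class is the same under $\traindist$ and $\testdist$. Restricted to $\xy$-measurable $f$, the risk $\EE_P[L(Y,f(X))]$ is minimized pointwise by a functional of $\Pr_P(Y\given\xy)$: for square error by $f^*_P(\xy)=\EE_P[Y\given\xy]$, and for cross entropy by the conditional law $\Pr_P(Y\given\xy)$ (for binary $Y$ these coincide, since $\Pr_P(Y{=}1\given\xy)=\EE_P[Y\given\xy]$), and the minimizer is a.e.\ unique. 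It therefore suffices to show that the relevant functional of $\Pr(Y\given\xy)$ is domain invariant; I will show $\Pr_\traindist(Y\given\xy)=\Pr_\testdist(Y\given\xy)$ in cases~1 and~2, and $\EE_\traindist[Y\given\xy]=\EE_\testdist[Y\given\xy]$ in case~3. Then the same function $v\mapsto\EE[Y\given\xy{=}v]$ minimizes the risk under both distributions, so $f^*$ is also the target minimizer.

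\textbf{Anti-causal case.} In \cref{fig:scm-anticausal} the only parent of $\xy$ among the relevant variables is $Y$, while $U$ is a common cause of $Y,Z$ and $S$ is a descendant of $Y,Z$ only. Reading the graph, every path from $\xy$ to $\{Z,U,S\}$ is blocked given $Y$ (the fork at $Y$ closes the direct paths, and the collider at $\xyz$ stays closed because none of its descendants is conditioned on), so $\xy\independent\{Z,U,S\}\given Y$. Hence $\Pr(\xy\given Y,U,S{=}1)=\Pr(\xy\given Y)$ is the domain-invariant causal mechanism. Marginalizing the causal-compatibility representation to $(\xy,Y)$ gives $\Pr_\traindist(\xy,Y)=\Pr(\xy\given Y)\,\Pr_\traindist(Y)$, and likewise for $\testdist$; since $\Pr_\traindist(Y)=\Pr_\testdist(Y)$ the two joint laws agree, and so does $\Pr(Y\given\xy)$.

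\textbf{Causal-direction cases.} First I would exploit the purely spurious assumption: the Markov property gives $\Pr(Y\given\xy,\xyz,Z)=\Pr(Y\given\xy,\xyz)$ while $Y\independent X\given\xy,Z$ gives $\Pr(Y\given\xy,\xyz,Z)=\Pr(Y\given\xy,Z)$; equating, both equal $\Pr(Y\given\xy)$, which I denote $m(Y\given\xy)$. Thus the mechanism depends on $\xy$ alone, and combined with $\xy\independent Z$ (the only connecting paths pass through the colliders $\xyz$ or $Y$) this yields $Y\independent Z\given\xy$ and indeed $Y\independent Z$ in the base law $\Pr_0$. For case~2 (selection, no confounding) write $\traindist=\Pr_0(\cdot\given S{=}1)$ with $S$ a function of $(Y,Z)$; using $m$ and $\xy\independent Z$, the selection reweighting collapses to $w(Y)\defeq\sum_z\Pr_0(S{=}1\given Y,z)\Pr_0(z)$, a function of $Y$ only, so $\Pr_\traindist(Y\given\xy)\propto m(Y\given\xy)\,w(Y)$ after normalizing over $Y$. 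The same computation gives $\Pr_\traindist(Y)\propto\Pr_0(Y)\,w(Y)$, so $\Pr_\traindist(Y)=\Pr_\testdist(Y)$ forces $w\propto\tilde w$; the constant cancels in the normalization and $\Pr_\traindist(Y\given\xy)=\Pr_\testdist(Y\given\xy)$. For case~3 (confounding, no selection, additive), from $Y\leftarrow g(\xy)+\tilde g(U)+\xi$ with $\EE[\xi\given\xy]=0$ and $U\independent\xy$ I get $\EE_P[Y\given\xy]=g(\xy)+\EE_P[\tilde g(U)]$, so the confounder contributes only an $\xy$-independent constant. As $\xy$ is exogenous and unselected, $\Pr_\traindist(\xy)=\Pr_\testdist(\xy)$ and thus $\EE_\traindist[g(\xy)]=\EE_\testdist[g(\xy)]$; combined with $\EE_\traindist[Y]=\EE_\testdist[Y]$ this pins $\EE_\traindist[\tilde g(U)]=\EE_\testdist[\tilde g(U)]$, so the conditional means coincide.

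\textbf{Main obstacle.} The reduction and the anti-causal case are routine; the real work is cases~2 and~3, where the non-causal association is nontrivial and could a priori make $\Pr(Y\given\xy)$ domain dependent. The crux is to show the shift-induced nuisance is low dimensional enough to be neutralized by the single constraint $\Pr_\traindist(Y)=\Pr_\testdist(Y)$: under selection it reduces to a $Y$-only reweighting that cancels after normalization, and under confounding the additivity assumption collapses the confounder's influence to an additive constant pinned by matching $\EE[Y]$. The delicate bookkeeping is verifying exactly where pure spuriousness, $\xy\independent Z$, and additivity each enter, and checking the collider/fork separations that license every conditional-independence step.
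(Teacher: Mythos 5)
Your proposal is correct in substance and, for the reduction, the anti-causal case, and the confounding case, it follows essentially the same route as the paper: the paper likewise reduces the problem to showing $\EE_\traindist[Y \given \xy] = \EE_\testdist[Y \given \xy]$, proves the anti-causal case from $\traindist(Y)=\testdist(Y)$ together with the d-separation $\xy \independent S, U \given Y$ (its Bayes-rule computation is the same as your marginalization of the joint), and proves the confounding case with exactly your additivity argument (pin down $\EE[\tilde{g}(U)]$ by matching $\EE[Y]$ and $\EE[g(\xy)]$ across domains). The one place you genuinely diverge is case 2: the paper dispatches it in a single line (``follows essentially the same argument,'' i.e.\ Bayes' rule plus $\xy \independent S \given Y$ read from the graph), whereas you carry out the selection reweighting explicitly and show that the nuisance collapses to a $Y$-only factor $w(Y)$ which the constraint $\traindist(Y)=\testdist(Y)$ pins down up to a constant that cancels under normalization. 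Your version is more explicit and makes visible exactly where no-confounding, pure spuriousness, and the matched $Y$-marginal each enter---detail the paper omits.

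One caveat on your case-2 derivation of $Y \independent Z \given \xy$: the step ``equating, both equal $\Pr(Y\given\xy)$'' is not a valid inference from the two conditional independences alone. A random variable that is measurable with respect to both $\sigma(\xy,\xyz)$ and $\sigma(\xy,Z)$ need not be $\sigma(\xy)$-measurable. Concretely, take $\xy, Z$ independent fair coins, $\xyz = Z$, and $Y \leftarrow \xy \oplus \xyz$: then $Y \independent Z \given \xy,\xyz$ and $Y \independent X \given \xy, Z$ both hold, yet $\Pr(Y \given \xy, Z)$ depends on $Z$ while $\Pr(Y\given\xy)\equiv\tfrac12$ (and one can check that under suitable selection the theorem's conclusion itself fails in this degenerate example). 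What makes your step sound is the \emph{graphical} reading of ``purely spurious''---the edge $\xyz \to Y$ is absent, so $\xy$ is the only parent of $Y$ and $Y \independent (Z,\xyz) \given \xy$ follows directly from the Markov property---rather than the bare conditional-independence statement. This caveat is shared with the paper: its own one-line treatment of case 2 needs the same graphical reading, since otherwise conditioning on $Y$ opens the collider at $\xyz$ on the path $\xy \to \xyz \leftarrow Z \to S$ and $\xy \independent S \given Y$ fails. So your argument is fine once you invoke the structural assumption directly instead of deriving it by ``equating.''
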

\begin{remark}
The causal case with confounding requires an additional assumption (additive structure) because, e.g., an interaction between confounder and $\xy$ can
yield a case where $\xy$ and $Y$ have a different relationship in each domain (whence, out-of-domain learning is impossible).
\end{remark}
This result gives a recipe for finding a good predictor in the target domain even without
access to any target domain examples at training time.
Namely, find the counterfactually invariant risk minimizer in the training domain.
In practice, we can use the regularization scheme of \cref{sec:obs-imp} to (approximately) achieve this.
We'll see in \cref{sec:experiments} that this works well in practice.

\paragraph{Optimality}
\Cref{thm:cf_risk_min_unique} begs the question: if the only thing we know about the target setting is that it's causally compatible with the training data, is the best predictor the counterfactually invariant predictor with lowest training risk?
A natural way to formalize this question is to study the predictor with the best performance in the worst case target distribution.
We define $\mathcal{\testdist} = \{\testdist \st \testdist \text{ causally compatible with } P\}$ and the $\mathcal{\testdist}$-minimax predictor $f^*_{\mathrm{minimax}} = \argmin_{f \in \modelclass} \max_{\testdist \in \mathcal{\testdist}} \EE_{\testdist}[L(Y, f(X)]$.
The question is then: what's the relationship between the counterfactually invariant risk minimizer and the minimax predictor?

\begin{restatable}{theorem}{cfMinimax}\label{thm:cfiminimax} 
  The counterfactually invariant risk minimizer is not $\mathcal{\testdist}$-minimax in general.
  However, under the conditions of \cref{thm:cf_risk_min_unique}, if the association is purely spurious, $\xyz \independent Y \given \xy, Z$, and $\Pr(Z,Y)$ satisfies overlap, then the two predictors are the same.
  By overlap we mean that $\Pr(Z,Y)$ is a discrete distribution such that for all $(z,y)$, if $\Pr(z,y)>0$ then there is some $y'\neq y$ such that also $\Pr(z,y')>0$. 
\end{restatable}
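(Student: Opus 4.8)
The plan is to treat the two assertions separately: first exhibit a counterexample for the ``not minimax in general'' claim, and then prove the positive equivalence by constructing a saddle point of the risk game $(f,\testdist)\mapsto\EE_{\testdist}[L(Y,f(X))]$.

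For the negative part I would build a small discrete example in which overlap fails so badly that the spurious feature becomes perfectly reliable across the \emph{entire} compatible family. Take the anti-causal graph with binary $Y,Z$ and a confounder $U$ that forces $Z=Y$ almost surely (so $\Pr(z,y)>0$ only when $y=z$, violating overlap), and take the $Z\to\xz$ mechanism to be informative. Because causal compatibility only lets us revary $\tilde{\testdist}(U)$ and the selection event, every $\testdist\in\mathcal{\testdist}$ still satisfies $Z=Y$, so $\xz$ is a domain-stable proxy for $Y$. A predictor reading $Y$ off $\xz$ then attains (near) zero risk uniformly over $\mathcal{\testdist}$, whereas the counterfactually invariant $f^*$, a function of $\xy$ alone, has strictly positive worst-case risk. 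Hence $f^*\neq f^*_{\mathrm{minimax}}$, and the example simultaneously shows that overlap cannot be dropped from the positive statement.

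For the positive part I would argue that $(f^*,\testdist^*)$ is a saddle point for a suitable least-favorable $\testdist^*\in\mathcal{\testdist}$. The first ingredient, essentially re-reading the proof of \cref{thm:cf_risk_min_unique}, is that every counterfactually invariant $f$ has risk $\EE_{\testdist}[L(Y,f(X))]$ \emph{constant} over $\testdist\in\mathcal{\testdist}$ (in the anti-causal case because compatibility fixes $\Pr(Y)$ and the mechanism $Y\to\xy$, hence the law of $(Y,\xy)$; in the causal cases under the additive/purely-spurious hypotheses). Call this common value $R(f)$, so $\max_{\testdist}\EE_{\testdist}[L(Y,f^*(X))]=R(f^*)=\min_{f\in\modelclass^{\mathrm{invar}}}R(f)$. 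The second, and main, ingredient is to construct $\testdist^*$ under which $Y\independent Z$: by overlap every supported $z$ is compatible with both labels, so the product law $\testdist^*(Z)\,\Pr(Y)$ has support inside that of $\Pr(Z,Y)$ and is therefore reachable within $\mathcal{\testdist}$ through an appropriate selection event while preserving $\Pr(Y)$.

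It then remains to show that the \emph{unconstrained} Bayes rule under $\testdist^*$ is itself counterfactually invariant and equals $f^*$. Under $\testdist^*$ we have $Y\independent Z$; combined with $\xy$ being a function of $Y$ and exogenous noise (anti-causal) this gives $Y\independent Z\given\xy$, and the purely-spurious condition $Y\independent X\given\xy,Z$ together with $\xyz\independent Y\given\xy,Z$ collapses the conditional mean to $\EE_{\testdist^*}[Y\given X]=\EE_{\testdist^*}[Y\given\xy]$, a function of $\xy$ and hence invariant. Being globally risk-minimal \emph{and} invariant, this Bayes rule is the invariant risk minimizer under $\testdist^*$, which by constancy of $R(\cdot)$ is exactly $f^*$. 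Feeding $R_{\mathrm{Bayes}}(\testdist^*)=R(f^*)$ into the chain $\min_f\max_{\testdist}\ge\max_{\testdist}\min_f\ge R_{\mathrm{Bayes}}(\testdist^*)$ pins $\min_f\max_{\testdist}=R(f^*)$, attained by $f^*$; uniqueness of the Bayes rule for square error / cross entropy forces any other minimax solution to agree with $f^*$ almost everywhere. I expect the crux to be the $\testdist^*$ construction: showing that $Y\independent Z$ is simultaneously (i) realizable inside the causally compatible family, which is precisely where overlap is needed, and (ii) strong enough to make the unconstrained Bayes rule invariant, a step that in the confounded causal case additionally leans on the additive structure inherited from \cref{thm:cf_risk_min_unique}.
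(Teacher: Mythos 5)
Your route is genuinely different from the paper's, and in the anti-causal and selection cases it is essentially sound. The paper argues competitor-by-competitor: reducing to $X=\xz$ (then conditioning on $\xy$), it takes any $f$ that beats the constant invariant predictor $c$ under some compatible distribution, collects the set $A$ of $(z,y)$ pairs where $f$ wins, and uses overlap to build a label-flipped selection event of positive probability on which $f$ must lose; since $c$'s risk is unchanged by such shifts, $f$'s worst-case risk strictly exceeds $c$'s. You instead run a least-favorable-distribution argument: one decorrelated $\testdist^*$ whose unconstrained Bayes rule is $\xy$-measurable, fed into the chain $\max_{\testdist} \EE_{\testdist}[L(Y,f(X))] \geq \EE_{\testdist^*}[L(Y,f(X))] \geq \EE_{\testdist^*}[L(Y,f^*(X))]$. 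Your counterexample for the negative claim (overlap failure via $Z=Y$) also differs from the paper's ($\xyz=\mathrm{AND}(Y,Z)$, i.e., failure of pure spuriousness) but is equally valid, with the bonus of showing overlap cannot be dropped.

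There are, however, two genuine gaps. The main one is your first ingredient: the claim that every counterfactually invariant predictor has risk \emph{constant} over the compatible family is false in the confounded causal case (case 3 of \cref{thm:cf_risk_min_unique}). What that theorem's proof delivers there is invariance of the conditional mean $\EE[Y \given \xy]$, not of the law of $(Y,\xy)$: shifting $\tilde{\traindist}(U)$ to $\tilde{\testdist}(U)$ changes, e.g., $\var(\tilde{g}(U))$ and hence the risk of every predictor, including $f^*$. Moreover, in that case the admissible shifts are confounder shifts, so your selection-based construction of $\testdist^*$ is unavailable, and a confounder shift alone generally cannot achieve $Y \independent Z$ while matching the whole marginal $\traindist(Y)$ (a point mass at $u_0$ kills the dashed edge but changes the law of $Y$). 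You flagged this case as the crux but did not resolve it, so your proof is incomplete exactly where extra work is needed; note the paper's comparison-based argument is more robust here because it only compares losses at fixed $(z,y)$, where domain shifts enter purely as reweightings, rather than needing the absolute risk of $f^*$ to be domain-independent. The second, smaller gap: your product law $\testdist^*(Z)\,\Pr(Y)$ requires every supported $z$ to be compatible with \emph{every} supported label, but the stated overlap condition only guarantees \emph{some} second label $y'$ per $z$. These coincide for binary $Y$, but for general discrete $Y$ your $\testdist^*$ need not exist inside the compatible family, whereas the paper's flip argument needs only ``some label with worse risk,'' which is exactly what overlap provides.
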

Conceptually, \cref{thm:cfiminimax} just says that the counterfactually invariant predictor excludes $\xyz$, even when this information is useful in every domain. 
In the purely spurious case, $\xyz$ carries no useful information, so counterfactual invariance is optimal.

\section{Experiments}\label{sec:experiments}
The main claims of the paper are:
\begin{enumerate}
    \item Stress test violations can be reduced by (conditional) independence regularization.
    \item This reduction will improve out-of-domain prediction performance.
    \item To get the full effect, the imposed penalty must match the causal structure of the data.
\end{enumerate}

\paragraph{Setup}
To assess these claims, we'll examine the behavior of predictors trained with the marginal or conditinal regularization on multiple text datasets that have either causal or anti-causal structure.
We expect to see that marginal regularization improves stress test and out-of-domain performance on data
with causal-confounded structure, and conditional regularization improves these on data with anti-causal structure.

For each experiment, we use BERT~\citep{devlin2019bert} finetuned to predict a label $Y$ from the text as our base model.
We train multiple causally-regularized models on the each dataset.
The training varies by whether we use the conditional or marginal penalty, and by the strength of the regularization term. That is, we train identical architectures using $\mathrm{CrossEntropy} + \lambda \cdot \mathrm{Regularizer}$ as the objective function, where we vary $\lambda$ and take $\mathrm{Regularizer}$ as either the marginal penalty, \cref{eq:marginal-mmd}, or conditional penalty, \cref{eq:conditional-mmd}.
We compare these models' predictions on data with causal and anti-causal structure.

See supplement for experimental details.

\subsection{Robustness to Stress Tests}
First, we examine whether enforcing the causal regularization actually helps to enforce counterfactual invariance. We create counterfactual (stress test) examples by perturbing the input data and compare the prediction on these.
We build the experimental datasets using Amazon reviews from the product category ``Clothing, Shoes, and Jewelry'' \cite{Ni:Li:McAuley:2019}. 

\paragraph{Synthetic} 
To study the relationship between counterfactual invariance and the distributional signature of \cref{thm:invar2indep}, we construct a synthetic confound. 
For each review, we draw a Bernoulli random $Z$, and then perturb the text $X$ so that the common words ``the'' and ``a'' carry information about $Z$: for example, we replace ``the'' with the token ``thexxxxx'' when $Z=1$.
We take $Y$ to be the review score, 
and subsample so $Y$ is balanced.
This data has anti-causal structure: the text $X$ is written to explain the score $Y$.
Further, we expect that the $Y,Z$ association is purely spurious,
because ``the'' and ``a'' carry little information about the label.

We train the models on data where $\Pr(Y=Z) = 0.3$. 
We then create perturbed stress-test datasets by changing each example $X_i(z)$ to the counterfactual $X_i(1-z)$ 
(using the synthetic model).
By measuring the performance of each model on the perturbed data, we can test whether the distributional properties enforced by the regularizers result in counterfactual invariance at the instance level. \autoref{fig:synth-results} shows that conditional regularization (matching the anti-causal structure) reduces checklist failures, as measured by the frequency that the predicted label changes due to perturbation as well as the mean absolute difference in predictive probabilities that is induced by perturbation.

\begin{figure}
    \centering
    \subfloat{\includegraphics[width=.5\textwidth]{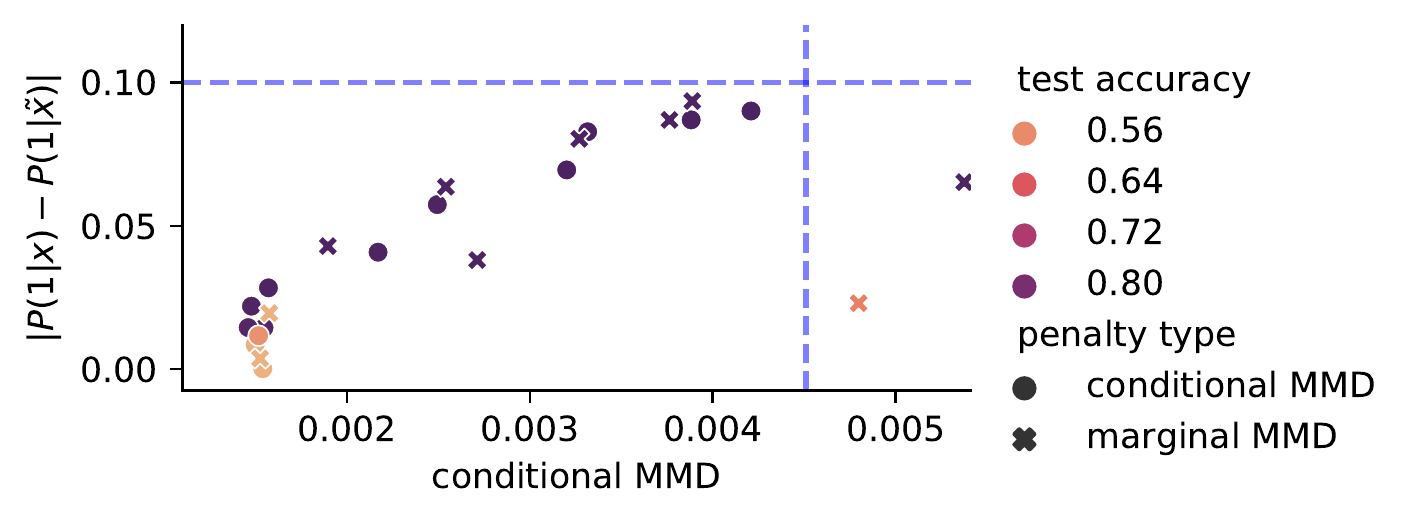}}
    \subfloat{\includegraphics[width=.5\textwidth]{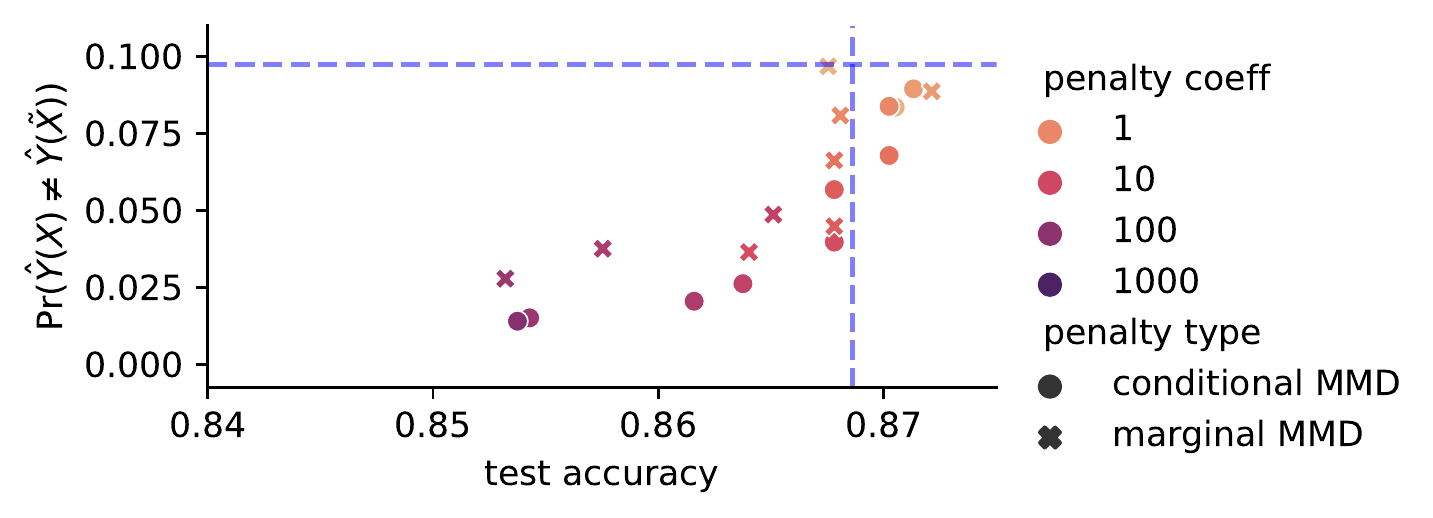}}
        \caption{\textbf{Regularizing conditional MMD improves counterfactual invariance on synthetic anti-causal data}. Sufficiently high regularization of marginal MMD also improves invariance, but impairs accuracy. Dashed lines show baseline performance of an unregularized predictor. \textbf{Left}: lower conditional MMD implies that predictive probabilities are invariant to perturbation. Although marginal MMD penalization can result in low conditional MMD and good stress test performance, this comes at the cost of very low in-domain accuracy. \textbf{Right}: MMD regularization reduces the rate of predicted label flips on perturbed data, with little affect on in-domain accuracy. Conditional MMD regularization reduces predicted label flips to $1.4\%$, while the best result for marginal MMD is $2.8\%$.}
        \label{fig:synth-results}
        \vspace{-8pt}
\end{figure}

\paragraph{Natural} To study the relationship in real data, we use the review data in a different way.
We now take $Z$ to be the score, binarized as $Z \in \{\text{1 or 2 stars}, \text{4 or 5 stars}\}$.
We use this $Z$ as a proxy for sentiment, and consider problems where sentiment should (plausibly) not have a causal effect on $Y$.  
For the causal prediction problem, we take $Y$ to be the helpfulness score of the review (binarized as described below). 
This is causal because readers decide whether the review is helpful based on the text. 
For the anti-causal prediction problem, we take $Y$ to be whether ``Clothing'' is included as a category tag for the product under review (e.g., boots typically do not have this tag). This is anti-causal because the product category affects the text.

We control the strength of the spurious association between $Y$ and $Z$.
In the anti-causal case, this is done by selection: we randomly subset the data 
to enforce a target level of dependence between $Y$ and $Z$.
The causal-direction case with confounding is more complicated.
\begin{figure}
    \centering
        \includegraphics[width=.7\textwidth]{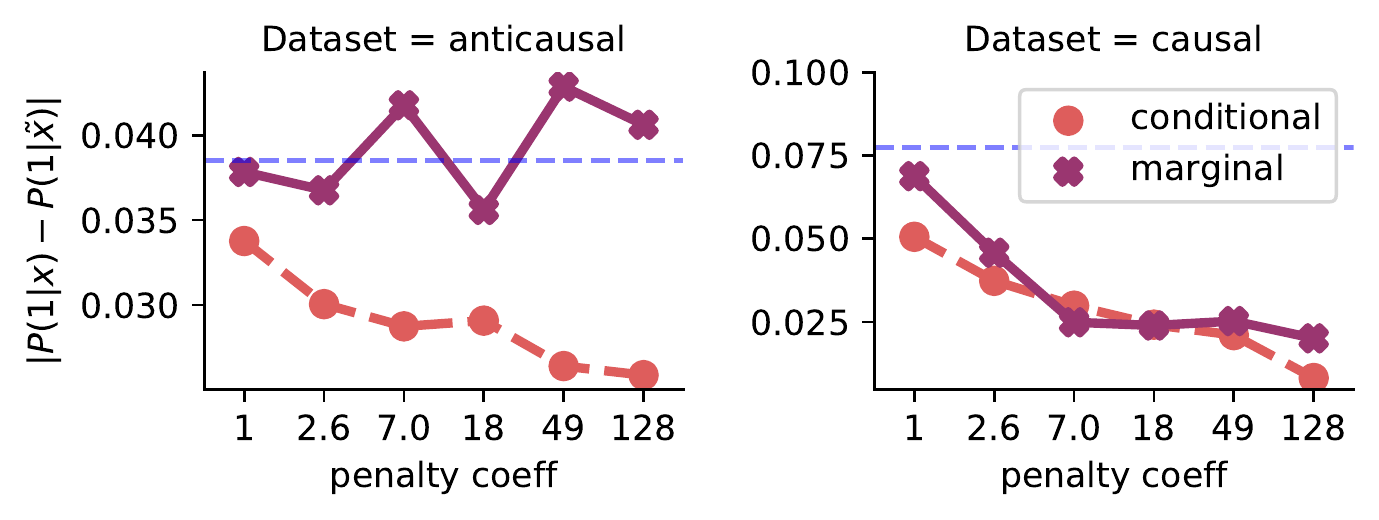}
    \caption{\textbf{Penalizing the MMD matching the causal structure improves stress test performance} on natural product review data. Note that penalizing the wrong MMD may not help: the marginal MMD hurts on the anticausal dataset. Perturbations are generated by swapping positive and negative sentiment adjectives in examples.}
    \label{fig:amazon-pert}
\end{figure}
To manipulate confounding strength, we binarize the number of helpfulness votes $V$ in a manner determined by the target level of association. We take $Y = 1[V > T_Z]$ where $T_Z$ is a $Z$-dependent threshold, chosen to induce a target association.
We choose $\Pr(Y=0 \given Z=0) = \Pr(Y = 1 \given Z=1) = 0.3$.
We balance $Z$ by subsampling, which also balances $Y$.\looseness=-1

Now, we create stress test perturbations of these datasets by randomly changing adjectives in the examples. 
Using predefined lists of postive sentiment adjectives and negative sentiment adjectives, we swap any adjective that shows up on a list with a randomly sampled adjective from the other list. This preserves basic sentence structure, and thus creates a limited set of counterfactual pairs that differ on sentiment. 

Results for differences in predicted probabilities between original and perturbed data are shown in \cref{fig:amazon-pert}.
Each point is a trained model, which vary in measured MMD on the test data and on sensitivity to perturbations.
Recall that the conditional independence signature of \cref{thm:invar2indep} are necessary but not sufficient for counterfactual invariance, 
so it's not certain that regularizing to reduce the MMD will reduce perturbation sensitivity. 
Happily, we see that regularizing to reduce the MMD that matches the causal structure does indeed reduce sensitivity to perturbations.

Notice that regularizing the causally mismatched MMD can have strange effects.
Regularizing marginal MMD in the anti-causal case actually makes the model more sensitive to perturbations!

\subsection{Domain Shift}
Next, we study the effect of causal regularization on model performance under domain shift.
\paragraph{Natural Product Review} We again use the natural Amazon review data described above.
For both the causal and anti-causal data,
we create multiple test sets with variable spurious correlation strength.
This is done in the manner described above,
varying $\Pr(Y=0 \given Z=0) = \Pr(Y = 1 \given Z=1)=\gamma$.
Here, $\gamma$ is the strength of spurious association.
The test sets are out-of-domain samples. By design, $Y$ is balanced in each dataset,
so these samples are causally compatible with the training data.
For both the causal and anti-causal datasets, the training data has $\Pr(Y=0 \given Z=0) = \Pr(Y = 1 \given Z=1)=0.3$.
We train a classifier for each regularization type and regularization strength, and measure the accuracy on each test domain. The results are shown in \cref{fig:amazon-domain-robustness}.

First, the unregularized predictors do indeed learn to rely on the spurious association
between sentiment and the label. 
The accuracy of these predictors decays dramatically as 
\begin{figure}
    \captionsetup[subfigure]{labelformat=empty}
    \begin{subfigure}{.75\textwidth}    
    \centering
    \subfloat{\includegraphics[height=0.35\textwidth]{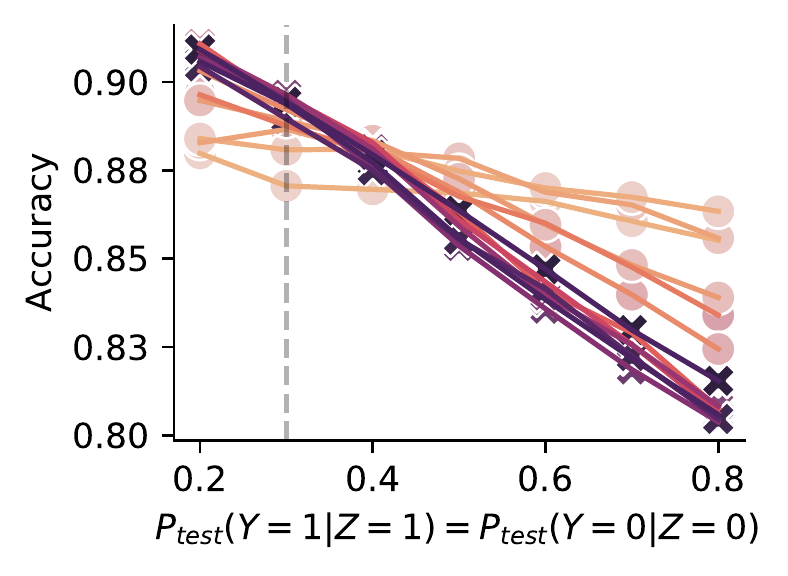}}
    \subfloat{\includegraphics[height=0.35\textwidth]{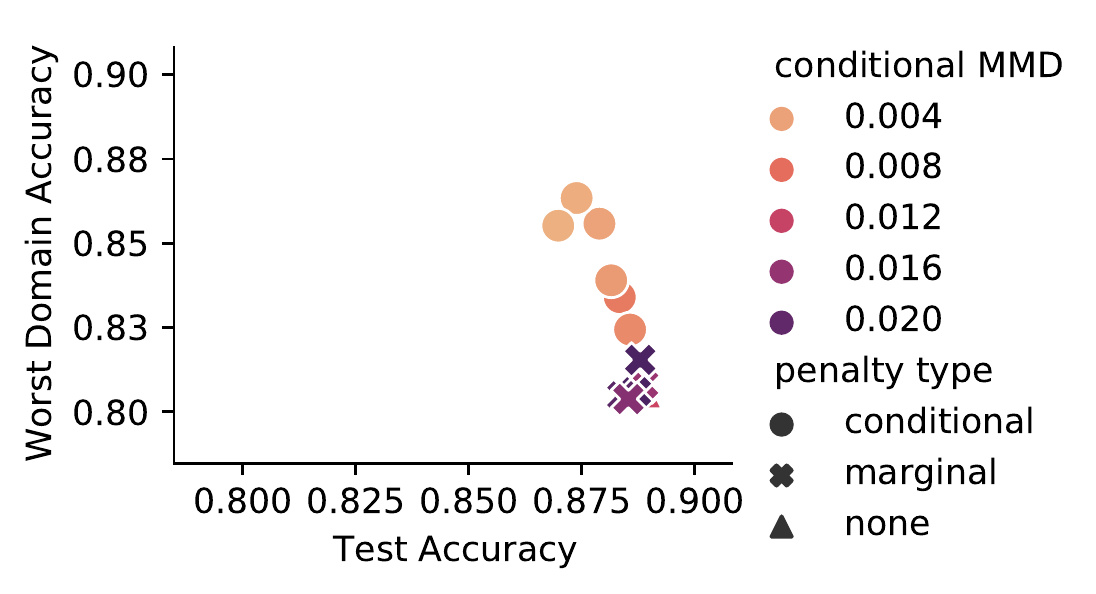}}
    \caption{\textbf{Anti-Causal Data}: conditional regularization improves domain-shift robustness.}
    \end{subfigure}

    \begin{subfigure}{.75\textwidth}    
    \centering
    \subfloat{\includegraphics[height=0.35\textwidth]{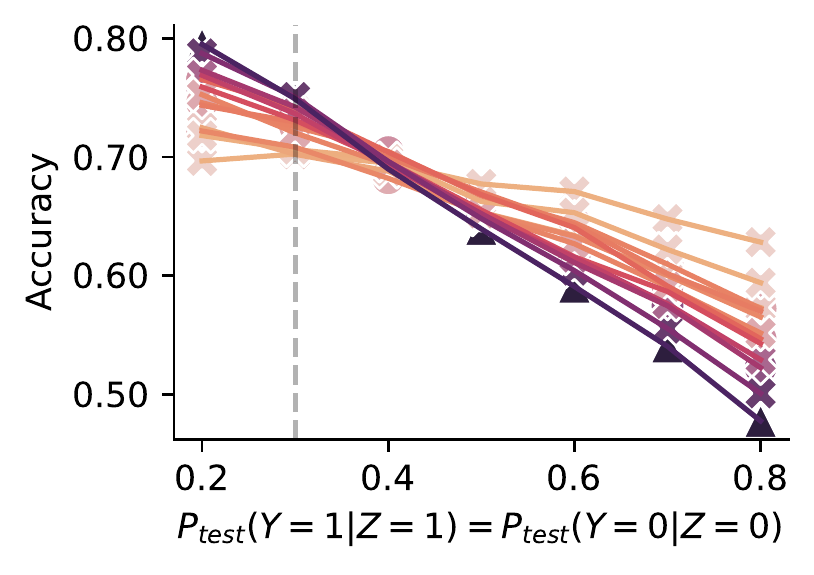}}
    \subfloat{\includegraphics[height=0.35\textwidth]{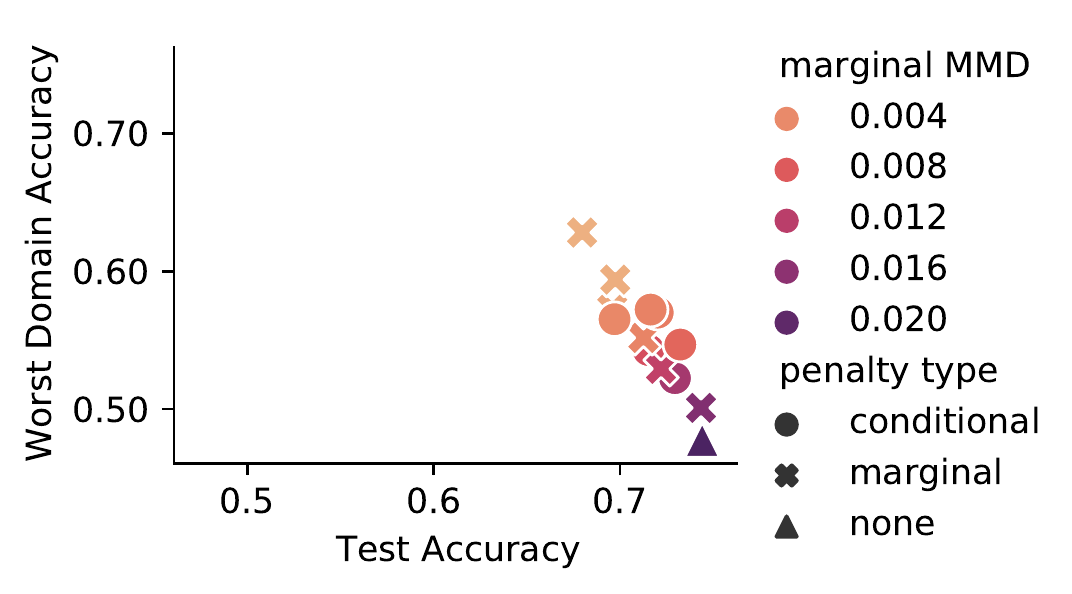}}
    
    \caption{\textbf{Causal-Direction Data}: marginal regularization improves domain-shift robustness. }
    \end{subfigure}
        \caption{\textbf{The best domain-shift robustness is obtained by using the regularizer that matches the underlying causal structure of the data.} 
The plots show out-of-domain accuracy for models trained on the (natural) review data.
In each row, the \textbf{left} figure shows out-of-domain accuracies (lines are models), with the $X$-axis showing the level of spurious correlation in the test data ($0.3$ is the training condition); the \textbf{right} figure shows worst out-of-domain accuracy versus in-domain test accuracy (dots are models).}
    \label{fig:amazon-domain-robustness}
\end{figure}
the spurious assocation moves from negative (0.3)
to positive---in the causal case, the unregularized predictor is worse than chance in the 0.8 domain. 

Following \cref{sec:obs-imp}, the regularization that matches the underlying causal structure should yield a predictor that is (approximately) counterfactually invariant.
Following \cref{thm:cf_risk_min_unique}, we expect that good performance of a counterfactually-invariant 
predictor in the training domain should imply good performance in each of the other domains.
Indeed, we see that this is so. 
Models that are regularized to have small values of the appropriate MMD do indeed have better out-of-domain performance.
Such models have somewhat worse in-domain performance, because they no longer exploit the spurious correlation.


\paragraph{MNLI Data}
For an additional test on naturally-occurring confounds, we use the multi-genre natural language inference (MNLI) dataset~\cite{williams-etal-2018-broad}. Instances are concatenations of two sentences, and the label describes the semantic relationship between them, $Y\in \{\text{contradiction, entailment, neutral}\}$. 
There is a well-known confound in this dataset: examples where the second sentence contain a negation word (e.g., ``not'') are much more likely to be labeled as contradictions~\cite{gururangan2018annotation}. 
Following \citet{sagawa2019distributionally}, we set $Z$ to indicate whether one of a small set of negation words is present. Although $Z$ is derived from the text $X$, it can be viewed as a proxy for a latent variable indicating whether the author intended to use negation in the text. This is an anti-causal prediction problem: the annotators were instructed to write text to reflect the desired label~\cite{williams-etal-2018-broad}.
\begin{figure}
    \centering
    \captionsetup[subfigure]{labelformat=empty}
    \begin{subfigure}[b]{.475\linewidth}
    \includegraphics[height=.4\linewidth]{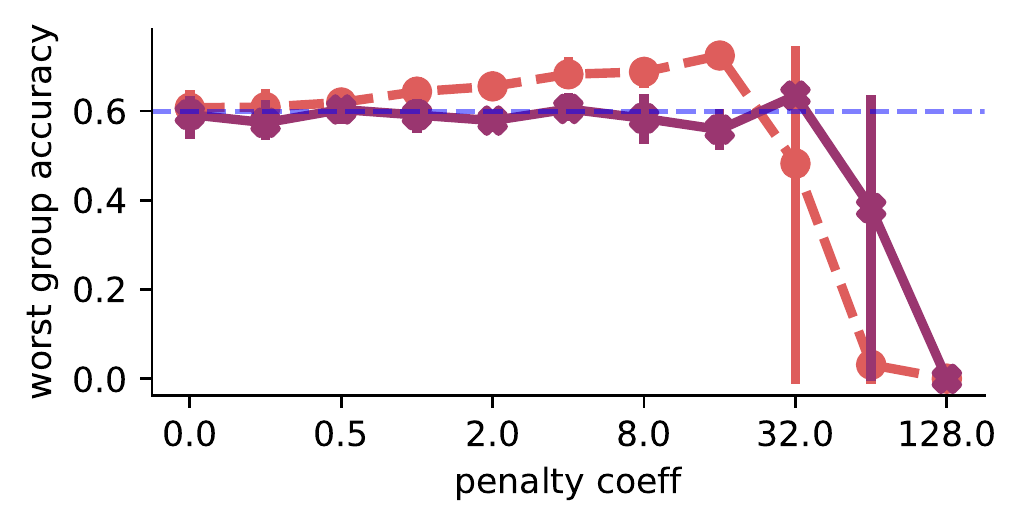} 
    \end{subfigure}
    \begin{subfigure}[b]{.475\linewidth}
    \includegraphics[height=.4\linewidth]{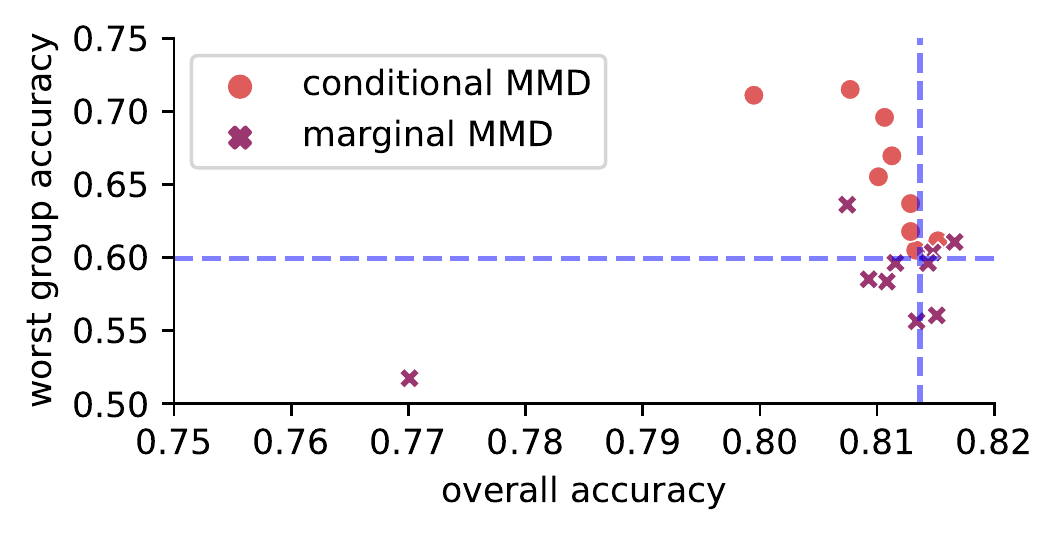}
    \label{fig:mnli-results-scatter}
    \end{subfigure}
        \caption{\textbf{Conditional MMD penalization improves robustness in anti-causal MNLI data.}
        Marginal regularization does not improve over the baseline unregularized model, shown with dashed lines.
        \textbf{Left:} Conditional regularization improves minimum accuracy across $(Y, Z)$ groups. When overregularized, the predictor returns the same $\hat{Y}$ for all inputs, yielding a worst-group accuracy of $0$.
        \textbf{Right:} Conditional MMD regularization significantly improves worst $(Y, Z)$ group accuracy ($y$-axis) while only mildly reducing overall accuracy ($x$-axis).}
    \label{fig:mnli-results}
    \vspace{-8pt}
\end{figure}

Following \citet{sagawa2019distributionally}, we
divide the MNLI data into groups by $(Y, Z)$ and 
compute the ``worst group accuracy'' across all such groups.  
Because this is an anti-causal problem, we predict that the conditional MMD is a more appropriate penalty than the marginal MMD. As shown in \autoref{fig:mnli-results}, this prediction holds: conditional MMD regularization dramatically improves performance on the worst group, while only lightly impacting the overall accuracy across groups.

\section{Related work}
Several papers draw a connection between causality and domain shifts \cite{Subbaswamy:Saria:2018,Subbaswamy:Chen:Saria:2019,Arjovsky:Bottou:Gulrajani:LopezPaz:2020,Meinshausen:2018,Peters:Buhlmann:Meinshausen:2016,RojasCarulla:Scholkopf:Turner:Peters:2018,Zhang:Scholkopf:Muandet:Wang:2013,Scholkopf:Janzing:Peters:Sgouritsa:Zhang:Mooij:2012}. 
Typically, this work considers a prediction setting where the covariates $X$ include both causes and effects of $Y$, and it is unknown which is which. The goal is to learn to predict $Y$ using only its causal parents. \citet{Zhang:Scholkopf:Muandet:Wang:2013} considers anti-causal domain shift induced by changing $P(Y)$ and proposes a data reweighting scheme. 
By contrast, counterfactual invariance is an example-wise notion of invariance, not an invariance across environments. 
In particular, learning a counterfactually invariant predictor only requires access to data from a single environment. \Cref{thm:cf_risk_min_unique} establishes that counterfactual invariance does imply a certain domain generalization guarantee over causally-compatible domains. Note though that the notion of causal compatibility is not generally the same as class of domain shifts previously considered. For example, we have invariant performance in the anti-causal setting, but this is ruled out by \citet{Arjovsky:Bottou:Gulrajani:LopezPaz:2020}.

A related body of work focuses on ``causal representation learning'' \cite{Besserve:Mehrjou:Sun:Scholkopf:2019,Locatello:Poole:Raetsch:Scholkopf:Bachem:Tschannen:2020,Scholkopf:Locatello:Bauer:Ke:Kalchbrenner:Goyal:Bengio:2021,Arjovsky:Bottou:Gulrajani:LopezPaz:2020}. 
Our approach follows this tradition, but focuses on splitting $X$ into components defined by their causal relationships with the label $Y$ and an additional covariate $Z$.
Rather than attempting to infer the causal relationship between $X$ and $Y$, we show that domain knowledge of this relationship is essential for obtaining counterfactually-invariant predictors. 
The role of causal vs anti-causal data generation in semi-supervised learning and transfer learning has also been studied~\cite{Scholkopf:Locatello:Bauer:Ke:Kalchbrenner:Goyal:Bengio:2021,Scholkopf:Janzing:Peters:Sgouritsa:Zhang:Mooij:2012}. 
In this paper we focus on a different implication of the causal vs anti-causal distinction.



Another line of work considers the case where either the counterfactuals $X(z)$, $X(z’)$ are observed for at least some data points, or where it's assumed that there's enough structural knowledge to (learn to) generate counterfactual examples \cite{Robey:Pappas:Hassani:2021,wu2021polyjuice,Garg:Perot:Limtiaco:Taly:Chi:Beutel:2019,Mitrovic:McWilliams:Walker:Buesing:Blundell:2020,Wei:Zou:2019,Varun:Choudhary:Cho:2020,Kaushik:Hovy:Lipton:2020,Teney:Abbasnejad:vandenHengel:2020}. \Citet{Kusner:Loftus:Russell:Silva:2017,Garg:Perot:Limtiaco:Taly:Chi:Beutel:2019} in particular examine a notion of counterfactual fairness
that can be seen as equivalent to counterfactual invariance.
In these papers, approximate counterfactuals are produced by direct manipulation of the features (e.g., change male to female names), generative models (e.g., style transfer of images), or crowdsourcing. Then, these counterfactuals can either be used as additional training data
or the predictor can be regularized such that it cannot distinguish between $X_i(z)$ and $X_i(z’)$.
This strategy can be viewed as enforcing counterfactual invariance directly; an advantage is that it 
avoids the necessary-but-not-sufficient nuance of \cref{thm:invar2indep}.
However, counterfactual examples can be difficult to obtain for language data in many realistic problem domains, and it may be difficult to learn to generalize from such examples~\cite{Huang:Liu:Bowman:2020}.

The marginal and conditional independencies of \cref{thm:invar2indep} have appeared in other contexts.
As discussed in \cref{rmk:fairness}, if we think of $Z$ as a protected attribute and $f$ as a `fair' classifier, then counterfactual invariance is counterfactual fairness,
the marginal independence is demographic parity, and the conditional independence is equalized odds \cite{Mehrabi:Morstatter:Saxena:Lerman:Galstyan:2019}.
In another setting, one approach to domain adaptation is to seek representations $\phi$ such that either $\phi(X)$~\citep[e.g.,][]{muandet2013domain,Baktashmotlagh:Harandi:Lovell:Salzmann:2013,Ganin:Ustinova:Ajakan:Germain:Larochelle:Laviolette:Marchand:Lempitsky:2016,Tzeng:Hoffman:Zhang:Saenko:Darrell:2014} or $\phi(X) \given Y$~\citep[e.g.,][]{Manders:vanLaarhoven:Marchiori:2019,Yan:Ding:Li:Wang:Xu:Zuo:2017} have the same distribution in each environment.
The connection here is subtler; it is inappropriate to interpret $Z$ as a domain label (we only have access to one domain at train time, and all values of $Z$ are present in each domain). 
To clarify the connection, consider introducing an extra variable $E$ that labels the domain. For concreteness, consider the anti-causal case where the spurious association between $Y$ and $Z$ is due to an observed confounder $U$. Now, suppose that $E$ is a cause of $U$. Then, $E$ labels the distribution of the unobserved confounder, and thus different values of $E$ correspond to different causally compatible domains. Now, if we draw the causal graph we can read off that $X_z^\perp \independent E | Y$. That is, we arrive at the same conditional independence criterion. However, it’s essentially a coincidence that this matches the criterion we’d use if we’d observed $Z$---the two variables have totally different causal meanings and interpretations. For example, in the review data case, we might take $Z$ to be review score and $E$ to be the website the review is collected from. The review quality doesn't need to be counterfactually invariant to the website, and indeed we wouldn't expect it to be!

Finally, this work can be viewed as part of a recent line of work using counterfactuals to examine the connection between example-wise robustness (stress testing) and distributional-level robustness (domain shift) \cite[e.g.,][]{Teney:Abbasnejad:vandenHengel:2020,Robey:Pappas:Hassani:2021,Kaushik:Setlur:Hovy:Lipton:2021}. 
\citet{Teney:Abbasnejad:vandenHengel:2020} use near counterfactuals with different labels in a modified training objective, and show improved performance out-of-domain empirically. 
\citet{Robey:Pappas:Hassani:2021} use image models to generate counterfactuals, and incorporate the model into training, showing out-of-domain improvement. They assume a somewhat different causal structure---articulating the precise connection is an interesting future direction.
\citet{Kaushik:Setlur:Hovy:Lipton:2021} study a linear-Gaussian model and argue that out-of-domain performance can be used as a signal to distinguish `causal' and `spurious' features.

\section{Discussion}
We used the tools of causal inference to formalize and study perturbative stress tests. 
A main insight of the paper is that counterfactual desiderata can be linked to observationally-testable conditional independence criteria.
This requires consideration of the true underlying causal structure of the data.
Done correctly, the link yields a simple procedure for enforcing the counterfactual desiderata, and mitigating the effects of domain shift.

The main limitation of the paper is the restrictive causal structures we consider. 
In particular, we require that $\xy$, the part of $X$ not causally affected by $Z$, is also statistically independent of $Z$ in the observed data. 
However, in practice these may be dependent due to a common cause.
In this case, the procedure here will be overly conservative, throwing away more information than required.
Additionally, it is not obvious how to apply the ideas described here to more complicated causal situations,  
which can occur in structured prediction (e.g., question answering).
Extending the ideas to handle richer causal structures is an important direction for future work.
The work described here can provide a template for this research program.

\section{Acknowledgments}
Thanks to Kevin Murphy, Been Kim, Molly Roberts, Justin Grimmer, and Brandon Stewart for feedback on an earlier version.



    

\printbibliography

\newpage
\appendix

\section{Proofs}

\decompExists*

\begin{proof}
Write $\{X(z)\}_z$ for the potential outcomes.
First notice that if $f(X)$ is $\{X(z)\}_z$-measurable then
$f(X)$ is counterfactually invariant. This is essentially by definition---intervention on $Z$
doesn't change the potential outcomes, so it doesn't change the value of $f(X)$.
Conversely, if $f$ is counterfactually invariant, then $f(X)$ is $\{X(z)\}_z$-measurable. 
To see this, notice that $X = \sum_z 1[Z=z]X(z)$ is determined by $Z$ and $\{X(z)\}_z$, so
$f(X)=\tilde{f}(Z,\{X(z)\}_z)$ for $\tilde{f}(z,\{x(z)\}_z) = f(\sum_z' 1[z'=z] x(z))$.
Now, if $\tilde{f}$ depends only on $\{X(z)\}_z$ we're done.
So suppose that there is $z,z'$ such that $\tilde{f}(z,\{X(z)\}_z) \neq \tilde{f}(z',\{X(z)\}_z)$ (almost everywhere).
But then $f(X(z)) \neq f(X(z'))$, contradicting counterfactual invariance.

Now, define $\mathcal{F}_{\xy} = \sigma(X) \land \sigma(\{X(z)\}_z)$ as the intersection
of sigma algebra of $X$ and the sigma algebra of the potential outcomes $\{X(z)\}_z$.
Because $\mathcal{F}_{\xy}$ is the intersection of sigma algebras, it is itself a sigma algebra.
Because every $\mathcal{F}_{\xy}$-measurable random variable is $\{X(z)\}_z$-measurable, 
we have that $Z$ is not a cause of any $\mathcal{F}_{\xy}$-measurable random variable (i.e., there is no arrow from $Z$ to $\xy$).
Because, for $f$ counterfactually invariant, $f(X)$ is both $X$-measurable and $\{X(z)\}_z$-measurable, it is also $\mathcal{F}_{\xy}$-measurable. $\mathcal{F}_{\xy}$ is countably generated, as $\{X(z)\}_{z}$ and $X$ are both Borel measurable.
Therefore, we can take $\xy$ to be any random variable such that $\sigma(\xy) = \mathcal{F}_{\xy}$.
\end{proof}

\invarToIndep*
\begin{proof}
Reading $d$-separation from the causal graphs, we have $\xy \independent Z$ in the causal-direction graph when
$Y$ and $Z$ are not selected on, and $\xy \independent Z \given Y$ for the other cases.
By assumption, $f$ is a counterfactually-invariant predictor, which means that $f$ is $\xy$-measurable. 

\end{proof}

\cfRiskMinUnique*
\begin{proof}
  First, since counterfactual invariance implies $\xy$-measurable,
  \begin{equation}
  \argmin_{f \in \modelclass^\mathrm{invar}} \EE_\traindist[L(Y,f(X)] = \argmin_{f} \EE_\traindist[L(Y,f(\xy)].
  \end{equation}
  It is well-known that under squared error or cross entropy loss the minimizer is $f^*(\xysamp) = \EE_\traindist[Y \given \xysamp]$.
  By the same argument, the counterfactually invariant risk minimizer in the target domain is $\EE_\testdist[Y \given \xysamp]$.
  Thus, our task is to show $\EE_\traindist[Y \given \xysamp] = \EE_\testdist[Y \given \xysamp]$.
  
  We begin with the anti-causal case. 
  We have that $\traindist(Y \given \xy) = \traindist(\xy \given Y) \traindist(Y) / \int \traindist(\xy \given Y) \intd{\traindist(Y)}$.
  By assumption, $\traindist(Y) = \testdist(Y)$. So, it suffices to show that $\traindist(\xy \given Y) = \testdist(\xy \given Y)$.
  To that end, from the anti-causal direction graph we have that $\xy \independent S,U \given Y$. Then, 
  \begin{align}
      \traindist(\xy \given Y) &= \int \Pr(\xy \given Y, U, S=1)  \intd{\tilde{\traindist}(U)} \\
      \quad &= \int \Pr(\xy \given Y, U, \tilde{S}=1)  \intd{\tilde{\testdist}(U)} \\
      \quad &= \testdist(\xy \given Y),
  \end{align}
  where the first and third lines are causal compatibility, and the second line is from $\xy \independent S,\tilde{S},U \given Y$.
  
  The causal-direction case with no confounding follows essentially the same argument.
  
  For the causal-direction case without selection,
  \begin{align}
      \EE_\traindist[Y \given \xy] &= g(\xy) + \EE_\traindist[\tilde{g}(U) \given \xy] + \EE_\traindist[\xi \given \xy] \\
      \quad &= g(\xy) + \EE_\traindist[\tilde{g}(U)] + 0. \label{eq:causal-conf-simp}
  \end{align}
  The first line is the assumed additivity. The second line follows because $\EE_\traindist[\xi \given \xy] = 0$ for all causally compatible distributions ($\Pr(\xi,\xy)$ doesn't change),
  and $U \independent \xy$.
  Taking an expectation over $\xy$, we have $\EE_\traindist[Y] = \EE_\traindist[g(\xy)] + \EE_\traindist[\tilde{g}(U)]$.
  By the same token, $\EE_\testdist[Y] = \EE_\testdist[g(\xy)] + \EE_\testdist[\tilde{g}(U)]$.
  But, $\EE_\traindist[g(\xy)] = \EE_\testdist[g(\xy)]$, since changes to the confounder don't change the distribution of $\xy$ (that is, $\xy \independent U$).
  And, by assumption, $\EE_\testdist[Y] = \EE_\traindist[Y]$. Together, these imply that $\EE_\traindist[\tilde{g}(U)] = \EE_\testdist[\tilde{g}(U)]$.
  Whence, from \cref{eq:causal-conf-simp}, we have $\EE_\traindist[Y \given \xy] = \EE_\testdist[Y \given \xy]$, as required.
\end{proof}

\cfMinimax*
\begin{proof}
  The reason that the predictors are not the same in general is that the counterfactually invariant predictor will always exclude information in $\xyz$,
  even when this information is helpful for predicting $Y$ in all target settings. For example, consider the case where $Y,Z$ are binary, $X=\xyz$
  and, in the anti-causal direction, $\xyz=\mathrm{AND}(Y,Z)$. With cross-entropy loss, the counterfactually invariant predictor is just the constant $\EE[Y]$,
  but the decision rule that uses $f(X)=1$ if $X=1$ is always better. In the causal case, consider $\xyz = Z$ and $Y = \xyz$.

  Informally, the second claim follows because---in the absence of $\xyz$ information---any predictor $f$ that's better than the counterfactually invariant predictor when $Y$ and $Z$ are positively correlated
  will be worse when $Y$ and $Z$ are negatively correlated.

  To formalize this, we begin by considering the case where $Y$ is binary and $X=\xz$. So, in particular, the counterfactually invariant predictor is just some constant $c$.
  Let $f$ be any predictor that uses the information in $\xz$. Our goal is to show that $\EE_\testdist[L(f(\xz),Y)] > \EE_\testdist[L(c,Y)]$ for at least one
  test distribution (so that $f$ is not minimax). To that end, let $\traindist$ be any distribution where $f(\xz)$ has lower risk than $c$ (this must exist, or we're done). Then, define $A = \{(z,y) : \EE_\traindist[L(f(\xz),y) \given z] < L(c,y)\}$. In words: $A$ is the collection of $z,y$ points where $f$ did better than the constant predictor.
  Since $f$ is better than the constant predictor overall, we must have $\Pr(A) > 0$.
  Now, define $A^c = \{(z,1-y) \st (z,y) \in A\}$. 
  That is, the set constructed by flipping the label for every instance where $f$ did better. 
  By the overlap assumption, $\Pr(A^c) > 0$.
  By construction, $f$ is worse than $c$ on $A^c$. 
  Further, $S = 1_A$ is a random variable that has the causal structure required by a selection variable (it's a child of $Y$ and $Z$ and nothing else). So, the distribution $\testdist$ defined by selection on $S$ is causally compatible with $\traindist$ and satisfies $\EE_\testdist[L(f(\xz),Y)] > \EE_\testdist[L(c,Y)]$, as required.

  To relax the requirement that $X=\xz$, just repeat the same argument conditional on each value of $\xy$. To relax the condition that $Y$ is binary, swap the flipped label $1-y$ for any label $y'$ with worse risk. 
\end{proof}

\section{Experimental Details}
\subsection{Model}
All experiments use BERT as the base predictor. We use \verb|bert_en_uncased_L-12_H-768_A-12| from TensorFlow Hub 
and do not modify any parameters. Following standard practice, predictions are made using a linear map from the representation layer. 
We use CrossEntropy loss as the training objective.
We train with vanilla stochastic gradient descent, batch size 1024, and learning rate $1e-5 \times 1024$.
We use patience 10 early stopping on validation risk.
Each model was trained using 2 Tensor Processing Units.

For the MMD regularizer, we use the estimator of \citet{Gretton:Borgwardt:Rasch:Scholkopf:Smola:2012} with the Gaussian RBF kernel.
We set kernel bandwidth to $10.0$.
We compute the MMD on $(\log f_0(x), \dots, \log f_k(x))$, where $f_j(x)$ is the model estimate of $\Pr(Y=k \given x)$.
(Note: this is $\log$, not $\mathrm{logit}$---the later has an extra, irrelevant, degree of freedom).
We use log-spaced regularization coefficients between $0$ and $128$.

\subsection{Data}
We don't do any pre-processing on the MNLI data.

The Amazon review data is from \cite{Ni:Li:McAuley:2019}. 
\subsubsection{Inducing Dependence Between $Y$ and $Z$ in Amazon Product Reviews}
To produce the causal data with $\Pr(Y=1 \given Z=1)=\Pr(Y=0 \given Z=0) = \gamma$
\begin{enumerate}
    \item Randomly drop reviews with $0$ helpful votes $V$, until both $\Pr(V>0 \given Z=1) > \gamma$ and $\Pr(V>0 \given Z=0) > 1-\gamma$.
    \item Find the smallest $T_z$ such that $\Pr(V>T_1 \given Z=1) < \gamma$ and $\Pr(V>T_0 \given Z=0) < 1-\gamma$.
    \item Set $Y = 1[V > T_0]$ for each $Z=0$ example and $Y = 1[V > T_1]$ for each $Z=1$ example.
    \item Randomly flip $Y=0$ to $Y=1$ in examples where $(Z=0,V=T_0+1)$ or $(Z=1,V=T_1+1)$, until $\Pr(Y=1 \given Z=1) > \gamma$ and $\Pr(Y=1 \given Z=0) > 1-\gamma$.
\end{enumerate}
After data splitting, we have $58393$ training examples, $16221$ test examples, and $6489$ validation examples.

To produce the anti-causal data with $\Pr(Y=1 \given Z=1)=\Pr(Y=0 \given Z=0) = \gamma$, choose a random subset with the target association.
After data splitting, we have $157616$ training examples, $43783$ test examples, and $17513$ validation examples.

\subsubsection{Synthetic Counterfactuals in Product Review Data}
We select $10^5$ product reviews from the Amazon ``clothing, shoes, and jewelery'' dataset, and assign $Y=1$ if the review is 4 or 5 stars, and $Y=0$ otherwise. For each review, we use only the first twenty tokens of text. We then assign $Z$ as a Bernoulli random variable with $\Pr(Z=1) = \frac{1}{2}$. When $Z=1$, we replace the tokens ``and'' and ``the'' with ``andxxxxx'' and ``thexxxxx'' respectively; for $Z=0$ we use the suffix ``yyyyy'' instead. Counterfactuals can then be produced by swapping the suffixes. To induce a dependency between $Y$ and $Z$, we randomly resample so as to achieve $\gamma=0.3$ and $\Pr(Y=1)=\frac{1}{2}$, using the same procedure that was used on the anti-causal model of ``natural'' product reviews. After selection there are $13,315$ training instances and $3,699$ test instances.

\end{document}